\documentclass[letterpaper, 10 pt, conference]{ieeeconf}  

\IEEEoverridecommandlockouts                              

\usepackage[noadjust]{cite}
\usepackage{graphicx}
\graphicspath{{./fig/}}

\usepackage{amsmath, amssymb}
\usepackage[caption=false,font=footnotesize]{subfig}
\usepackage{epsfig}

\newtheorem{theorem}{Theorem}
\newtheorem{remark}{Remark}

\usepackage{multicol,multirow}
\usepackage{xcolor}
\usepackage{epstopdf}
\usepackage{svg}

\DeclareMathOperator{\diag}{diag}
\DeclareMathOperator{\sat}{sat}
\usepackage{algorithm,algpseudocode}

\usepackage{hyperref} 
\usepackage{booktabs} 
\usepackage{array}    
\usepackage{multirow} 
\usepackage{geometry} 
\begin{document}

\title{Adaptive Artificial Time-Delay Control with Barrier Lyapunov Constraints for Euler–Lagrange Robots}

\author{
    Saksham Gupta$^{1}$, Rishabh Dev Yadav$^{2}$, Sarthak Mishra$^{1}$, Amitabh Sharma$^{1}$, Sourish Ganguly$^{1}$,\\ Wei~Pan$^{3}$, Spandan~Roy$^{1}$~and~Simone Baldi$^{4}$
    \thanks{The work is partly supported by the UASAT project sponsored by MeiTY, India. This work was partially supported by Jiangsu Provincial Scientific Research Center of Applied Mathematics No. BK20233002}
    \thanks{$^{1}$ The authors are with Robotics Research Center, International Institute of Information Technology Hyderabad, India.
     emails: \texttt{\{sakshammgupta, sourishganguly96\}@gmail.com}, \texttt{\{sarthak.mishra, amitabh.sharma\}@research.iiit.ac.in},  \texttt{spandan.roy@iiit.ac.in}}%
    \thanks{$^{2}$R. D. Yadav is with Department of Computer Science, University of Manchester, UK.
     email: \texttt{rishabh.yadav@postgrad.manchester.ac.uk}}%
     \thanks{$^{3}$W. Pan is with Autonomous Systems and Automatic Control in School of Engineering, Newcastle University, UK.
     email: \texttt{wei.pan2@newcastle.ac.uk}}
     \thanks{$^{4}$S. Baldi is with Self-Organizing Mobility Lab, School of Mathematics, Southeast University, Nanjing 210096, China, email:  \texttt{simonebaldi@seu.edu.cn}
}}

\maketitle

\begin{abstract}
We addresses the challenge of simultaneously compensating for state-dependent uncertainties and enforcing time-varying state constraints in Euler–Lagrange systems—common requirements in robotics yet underserved by existing control designs. A novel adaptive control framework is developed that combines an artificial time delay based uncertainty estimation (aka time delay estimation, TDE) with a barrier Lyapunov function to enforce constraint-aware control design. A state-dependent upper bound on the TDE approximation error is formulated, and an adaptive law is constructed to estimate its parameters online, enabling real-time state-dependent uncertainty compensation without relying on prior model knowledge. To ensure constraint compliance, the barrier Lyapunov function-based controller enforces time-varying bounds on both position and velocity. The resulting architecture is provably stable via Lyapunov analysis. Experimental results on a five degrees-of-freedom robotic manipulator validate the framework’s capability over the state-of-the-art in adhering to safety-critical constraints under dynamic uncertainties.
\end{abstract}



\section{Introduction}
Robotic systems are increasingly deployed in dynamic and uncertain environments, where safety, stability, and high performance are essential. Achieving reliable operation in such settings is particularly challenging due to the nonlinear Euler–Lagrange (EL) dynamics of the robots, and due to the effects of uncertain parameters and unmodeled terms. In these contexts, control strategies must operate reliably despite incomplete system knowledge, while enforcing state constraints to ensure task feasibility and safety.

Artificial time delay control (also known as time-delay estimation or TDE-based control) was introduced to reduce reliance on system models by approximating unknown dynamics using delayed state/input measurements \cite{103479, 4789852}. Its simplicity and low computational cost have led to widespread adoption in robotics \cite{7244191, 7506280,   8128491, 8701563, dhadekar2021robust, lim2019delayed}. However, most TDE-based methods \cite{7244191, 7506280,   8128491, 8701563, dhadekar2021robust, lim2019delayed} assume an a priori known bound on the approximation error—an assumption that limits robustness to state-dependent uncertainties. To overcome this limitation, adaptive TDE control frameworks have been proposed by \cite{9001188, 10011673, 9325954}, wherein a structured state-dependent upper bound on the TDE error is formulated and its parameters are updated online via adaptive laws. 
However, these methods do not incorporate mechanisms for enforcing state constraints, making them unsuitable for safety-critical applications.

In many practical applications, ensuring that system states remain within prescribed bounds is as critical as achieving tracking accuracy. Robotic arms operating near humans, aerial vehicles navigating constrained spaces, or manipulators handling delicate payloads, all demand strict enforcement of position and velocity limits for safety and reliability. While Model Predictive Control (MPC) can tackle such constraints, it requires accurate model and imposes high computational overhead  \cite{dhar2021indirect, berberich2020data, grune2017nonlinear, 10644611}. Barrier Lyapunov Functions (BLFs), by contrast, encode constraints directly into the control design, making it more unsuitable for real-time settings \cite{tee2009barrier, liu2016barrier, cao2022practical, sankaranarayanan2022robustifying, ganguly2022robust}.

Although early BLF-based control strategies required precise knowledge of system dynamics, more recent designs have relaxed this dependency by incorporating robust \cite{9636323, xu2021tangent} or adaptive \cite{liu2017barrier, shao2021adaptive, yang2020state, obeid2018barrier, laghrouche2021barrier} mechanisms. However, the majority of these methods enforce position-only 
reduce flexibility (cf. \cite{tee2009barrier, liu2016barrier, xu2021tangent, shao2021adaptive}). Time-varying BLF formulations have been developed (\cite{liu2020finite, ding2021adaptive, liu2019adaptive, fuentes2020adaptive, sankaranarayanan2022robustifying}), for improved constraint scheduling. Nevertheless, many of these approaches still rely on accurate model information (cf. \cite{liu2020finite, ding2021adaptive, sankaranarayanan2022robustifying}) or assume a priori bounded uncertainty (cf. \cite{liu2019adaptive, fuentes2020adaptive}). As shown in \cite{roy2020towards}, relying on a priori bounded uncertainty cannot account for state-dependent uncertainty and can even compromise stability.

These limitations motivate the development of an adaptive-robust control framework 
that simultaneously addresses state-dependent uncertainty and time-varying position and velocity constraints—two critical but often separately treated challenges in nonlinear control. 
The key contributions are:

$\bullet$ \textbf{State-dependent uncertainty adaptation}: A novel TDE formulation is derived with an explicit, structured upper bound on the approximation error. The bound's parameters are adapted online, enabling compensation of unknown, state-dependent terms.

$\bullet$ \textbf{Time-varying constraint enforcement}: A BLF-based control strategy is developed to enforce time-varying position and velocity constraints, accommodating 
initial errors and 
transients.

$\bullet$ \textbf{Cohesive design}: The design tightly couples adaptive uncertainty estimation and constraint satisfaction in a provably stable architecture. Stability and constraint satisfaction are established through Lyapunov analysis.

To the best of our knowledge, this is the first framework to couple Adaptive TDE with BLF-based constraint enforcement for nonlinear EL-based robotic systems. The approach is validated on a 5-degrees-of-freedom (DoF) robotic manipulator, outperforming the state of the art. 

The rest of the paper is organised as follows: Section II describes the EL robotics dynamics and the control problem; Section III details the proposed control framework, while corresponding stability analysis is provided in Section IV; comparative experimental results are in Section V, while Section VI provides concluding remarks.

The following notations are used in the paper: $(\bullet)_{L}$ denotes that $(\bullet)(t)$ is delayed by $L$, i.e., $(\bullet)(t-L)$; $\lambda _{\min}(\bullet)$, $\lambda _{\max}(\bullet)$, $|| \bullet ||$, $| \bullet |$ denote minimum and maximum eigenvalue, 2-norm and absolute value of $\bullet$ respectively; $ {I}_n \in\mathbb{R}^{n\times n}$ denotes Identity matrix; $\diag\lbrace \cdot, \cdots, \cdot \rbrace$ denotes a diagonal matrix with diagonal elements $\lbrace \cdot, \cdots, \cdot \rbrace$; $\exp$ denotes exponential function; the saturation function $\sat({s,\varpi})=(s/||s||)$ if $||s||\geq \varpi$ and $\sat({s,\varpi})=(s/\varpi)$ if $||s|| < \varpi$ with $\varpi\in \mathbb{R}^{+}$.

\section{System and Problem Formulation}
Let us consider the following class of robots modeled via Euler-Lagrange (EL) dynamics \cite{7244191, 7506280,   8128491, 8701563, dhadekar2021robust, lim2019delayed}: 
\begin{align} 
   M(   q(t))\ddot{   q}(t) &+   H(  q(t),\dot{  q}(t), t)=   \tau(t), \label{eq:TDdynamics}
\end{align}
where $ {q}, \dot{  q}\in\mathbb{R}^{n}$ are the system states (position and velocity), $  \tau\in\mathbb{R}^{n}$ is the generalized control input, $ {M}\in\mathbb{R}^{n\times n}$ is the mass/inertia matrix and $  H \in\mathbb{R}^{n}$ represents the combined effects of dynamic forces (e.g., Coriolis, dissipative, restoring, external disturbances etc.). 

The following property holds for standard EL mechanics \cite{spong2006robot}:

\textbf{Property 1:} The matrix $ {M}(   q)$ is uniformly positive definite for all $  q$, i.e., $\exists \psi_1, \psi_2 \in \mathbb{R}^{+}$ such that 
\begin{equation}
\psi_1  {I} \leq  {M}(   q) \leq \psi_2  {I} \Rightarrow (1/\psi_2)  {I} \leq  {M}^{-1}(   q) \leq (1/\psi_1)  {I}.\label{prop1}
\end{equation}

Let us introduce the amount of uncertainty in system as follows:
\begin{remark}[Uncertainty]\label{remark_1}
The precise knowledge of $  M$ is not available, only its upper bound ($\psi_2$) is known. 
Whereas, $H$ is unknown for control design.
\end{remark}


Introducing a constant diagonal matrix ${\bar {  M}}$, the dynamics (\ref{eq:TDdynamics}) can be compactly written as 
\begin{align} 
&{\bar{  M}\ddot {  q}} + { {N}}({ {q}},{{\dot {  q}},\ddot{  q}}) = {   \tau },\label{eq:robotdynamics2}\\
\text{with}~~~~&{ {N}}({ {q}},{{\dot {  q}},\ddot{  q}}) = [{ {M}}({ {q}}) - {\bar {  M}}]{\ddot {  q}} +   H(  q,\dot{  q})\label{eq:nonlinearterms}
\end{align}
and the selection of ${\bar {  M}}$ is discussed later (cf. Remark \ref{remark_mass}). The desired trajectories $ {q}^d(t)$ 
and their time-derivatives $\dot{  q}^d(t), \ddot{  q}^d(t) $ are designed to be smooth and bounded. The signals $({ {q}}(t),{{\dot {  q}}(t),\ddot{  q}}(t))$ are available for control design. 

\textbf{Control Objective:} 
Let the position and velocity tracking errors $( {e}, \dot{ {e}})$ be defined as 
\begin{align}\label{error}
        {e} =  {q} -  {q}^d ; 
       ~~~\dot{ {e} } = \dot{ {q}} - \dot{ {q}}^d.  
\end{align}
Let us decompose $e=[e_1~ \ e_2 \ldots e_n]^T$,  $\dot{e}=[\dot{e}_1~ \dot e_2 \ldots \dot{e}_n]^T$, where $e_i$ and $\dot{e}_i$ the tracking errors for the $i^{th}$ degree-of-freedom, $i=1,2, \cdots, n,$. Since we are interested in tracking control, the problem of imposing constraints on states $ {q}, \dot{  q}$ can be equivalently considered to be the problem of imposing constraints on tracking errors $ {e}, \dot{  e}$ around desired trajectories (cf. (\ref{error})). Accordingly, we define constraint functions 

{ \small
\begin{subequations} \label{eq:bounds_p}
\begin{align}
    k_{pi}(t) \hspace{-2pt}&=\hspace{-2pt} (k_{0pi} \hspace{-2pt}-\hspace{-2pt} k_{sspi}) \exp^{-\alpha_{p} t} + k_{sspi}, k_{0pi} \hspace{-2pt}>\hspace{-2pt} {e}_i (0) \\
    k_{vi}(t) \hspace{-2pt}&=\hspace{-2pt} (k_{0vi} \hspace{-2pt}-\hspace{-2pt} k_{ssvi}) \exp^{-\alpha_{v} t} + k_{ssvi},  k_{0vi}\hspace{-2pt}>\hspace{-2pt} \dot{e}_i (0) 
\end{align}
\end{subequations}
}
 where $\alpha_{p},  ~\alpha_{v}  \in \mathbb{R}^{+}$ are scalars; $(k_{0pi}, k_{0vi})$ and $(k_{sspi}, k_{ssvi} )$ are the initial and steady-state values of the respective constraints. Our control objective is to keep the tracking errors within the constraints as $|e_{i}(t)|< k_{pi}(t) , ~|  \dot{e}_{i}(t)|< k_{vi}(t) $ for all time $t \geq 0$. In the following, we discuss  
the design of the constraint functions (\ref{eq:bounds_p}).

\begin{remark}[Design of constraint function]\label{remark_constraint_choice}
To have a well-posed initial setting, initial tracking errors should satisfy $|e_i(0)| < k_{pi}(0), |\dot e_i(0)| < k_{vi}(0)$, as standard in the literature \cite{tee2009barrier, liu2016barrier, cao2022practical, sankaranarayanan2022robustifying, 9636323, xu2021tangent, liu2017barrier, shao2021adaptive,  yang2020state, obeid2018barrier, laghrouche2021barrier, liu2020finite, ding2021adaptive, liu2019adaptive, fuentes2020adaptive}. This can be used to define $(k_{0pi}, k_{0vi})$; whereas, $(k_{sspi}, k_{ssvi})$ can be selected based on user-defined steady-state bounds. Further, the parameters $(\alpha_{pi}, \alpha_{vi})$ determine the convergence rate, which can be as per application requirements. 
\end{remark}


\section{Proposed Adaptive Control Solution}
Variable dependency will be removed subsequently
for brevity whenever it is obvious. The control input $  \tau$ is designed as
\begin{subequations}
\begin{align}
  \tau &={\bar{  M}   u + \hat{  N}}({ {q}},{{\dot {  q}},\ddot{  q}}), \label{eq:input_1}\\ 
\text{with} ~~ {u} &=  {u}_0-  \Delta   u, \label{eq:input_2} \\
\hat{  N}(   q,\dot{  q} , \ddot{  q}) & \cong   N(  q_L,\dot{  q}_L,\ddot{  q}_L)=  \tau_L-\bar{  M}\ddot{  q}_L, \label{eq:approx}\\
  u_0 & = {\ddot {  q}^d} -   D_{D}\dot{  e}-   D_{P}   e, \label{eq:auxinput}\\
{D}_{P}& = \diag{\bigg\{\frac{1}{k_{p1}^{2}- e^{2}_1}, \cdot \cdot \cdot, \frac{1}{k_{pn}^{2}-  e^{2}_n}\bigg\}} , \label{const_gain1} \\
{D}_{D}&=\diag{\bigg\{\frac{1}{k_{v1}^{2}-  \dot{e}^{2}_1}, \cdot \cdot \cdot, \frac{1}{k_{vn}^{2}- \dot{e}^{2}_n}\bigg\}}, \label{const_gain2} 
\end{align}
\end{subequations}
and $  \Delta   u$ is the adaptive control term to be designed later; 
$\hat{ N}$ is the estimate of the uncertainty function $N$ derived from the past state/input data; 
$L>0$ is a small time delay introduced to collect the past data and its choice is discussed later (cf. Remark \ref{remark_L}). \emph{This estimation process via artificial/intentional injection of delay is conventionally called time-delay estimation (TDE)}. 

Substituting (\ref{eq:input_1}), (\ref{eq:input_2}) and (\ref{eq:auxinput}) into (\ref{eq:robotdynamics2}) gives the following error dynamics:
\begin{align}
\ddot{  e}& =-    D_{D}\dot{  e}-   D_{P}   e +   \sigma-  \Delta   u, \label{eq:errdynamics_1}
\end{align}
where $  \sigma={\bar{  M}}^{-1}({  N}-{\hat{  N}})$ is the \textit{estimation error} stemming from TDE process (\ref{eq:approx}) and it is termed as the \textit{TDE error}. The adaptive control term $  \Delta   u$ is designed based on the structure of the upper bound of TDE error $||   \sigma||$ derived subsequently.

\subsection{Upper bound structure of $||   \sigma||$} 
From (\ref{eq:TDdynamics}) and (\ref{eq:errdynamics_1}), the following relations can be achieved:
\begin{align}
\hat{  N}&= {N}_L=[{ {M}}({ {q}_L}) - {\bar {  M}}]{\ddot {  q}_L} +   H_L ,\label{sig 2} \\
   \sigma&=\ddot{  q}-  u. \label{sig 1} 
\end{align}
Using (\ref{sig 2}), the control input $   \tau$ in  (\ref{eq:input_1}) can be rewritten as
\begin{align}
   \tau &= \bar{  M}   u+[{  {M}}({ {q}_L}) - {\bar {  M}}]{\ddot {  q}_L} +   H_L. \label{tau new}
\end{align}
Multiplying both sides of (\ref{sig 1}) with $  M$ and using (\ref{eq:TDdynamics}) and (\ref{tau new})
\begin{align}
 {M}    \sigma  &=    \tau  -  H- {M}  {u}, \nonumber \\
& = \bar{  M}   u+[{  {M}}({ {q}_L}) - {\bar {  M}}]{\ddot {  q}_L} +   H_L -  H - {M}  {u}. \label{sig 3}
\end{align}
Defining 
$ {D} \triangleq [ {D}_P ~  {D}_D]$, $  \xi \triangleq  \begin{bmatrix}
{  e}^T&
\dot{  e}^T
\end{bmatrix}^T$ and using (\ref{eq:errdynamics_1}) we have
\begin{align}
\ddot{ {q}}_L& = \ddot{ {q}}^d_L -   D   \xi_L +    \sigma_L -   \Delta   u_L.\label{sig 4}
\end{align}
Substituting (\ref{sig 4}) into (\ref{sig 3}), and after re-arrangement yields
\begin{align}
   \sigma &= \underbrace{ {M}^{-1}\bar{ {M}}(    \Delta   u_L -    \Delta   u}_{   \chi_1}) + \underbrace{ {M}^{-1}( {M}   \Delta   u -  {M}_L   \Delta   u_L)}_{   \chi_2} \nonumber\\
&+\underbrace{ {M}^{-1}\lbrace (\bar{ {M}} -   {M} )\ddot{ {q}}^d + ( {M}_L - \bar{ {M}})\ddot{ {q}}^d_L+  H_L -  H \rbrace }_{   \chi_3} \nonumber\\
&+ \underbrace{(  I -  {M}^{-1}\bar{ {M}}) {D}   \xi}_{   \chi_4}  + \underbrace{ {M}^{-1}(\bar{ {M}} -  {M}_L) {D}   \xi_L}_{   \chi_5}  \nonumber\\
&+ \underbrace{ {M}^{-1}( {M}_L-\bar{ {M}})   \sigma_L}_{   \chi_6}. \label{sig 5}
\end{align}

Both $ {M}$ and $ {M}^{-1}$ are bounded from system property (\ref{prop1}). 
Using the relation $(\cdot)_L= 
(\cdot)(t)-\int_{-L}^0 {\frac{\mathrm{d}}{\mathrm{d}\theta}(\cdot)}(t+\theta)\mathrm{d}\theta$ and the fact that integration of any continuous function over a finite interval (here $-L$ to $0$) is always finite \cite{rudin1976principles}, the following conditions hold for unknown constants $\delta_i$, $i=1, \cdots,6$:

{\small
\begin{subequations}
\begin{align}
&|| {   \chi_1} ||=||\hspace{-.1cm} - \hspace{-.1cm}  {M}^{-1}\bar{ {M}} \int_{-L}^{0} \frac{\mathrm{d}}{\mathrm{d}\theta}   \Delta {  u}(t+\theta) \mathrm{d}\theta || \leq \delta_1 \label{chi_1}\\
&|| {   \chi_2} ||=|| {M}^{-1}\hspace{-.1cm} \int_{-L}^{0} \frac{\mathrm{d}}{\mathrm{d}\theta} {M}(t+\theta)   \Delta {  u}(t+\theta) \mathrm{d}\theta || \leq \delta_2 \label{chi_2}\\
&|| {   \chi_3} ||=|| {M}^{-1}\lbrace (\bar{ {M}} -   {M} )\ddot{ {q}}^d + ( {M}_L - \bar{ {M}})\ddot{ {q}}^d_L \nonumber\\
&\qquad \qquad \qquad -\int_{-L}^{0} \frac{\mathrm{d}}{\mathrm{d}\theta}{  H}(t+\theta)\mathrm{d}\theta \rbrace ||\leq \delta_3 \label{chi_3}\\
&|| {   \chi_4} ||= || (  I -  {M}^{-1}\bar{ {M}}) {D}   \xi || \leq ||  {E} {D}|| ||   \xi|| \label{chi_5}\\
&|| {   \chi_5} ||=||  {M}^{-1} \int_{-L}^{0} \frac{\mathrm{d}}{\mathrm{d}\theta}( \bar{ {M}} -  {M}(t+\theta)) {D}   \xi(t+\theta) \mathrm{d}\theta  \\
& \qquad \qquad +(  I -  {M}^{-1}\bar{ {M}} ) {D}   \xi  ||\leq ||  E    D ||  ||   \xi || + \delta_5 \label{chi_7}\\
&|| {\chi_6} || \hspace{-2pt}=\hspace{-2pt} ||  E    \sigma +   {M}^{-1}  \hspace{-3pt} \int_{-L}^{0}  \hspace{-3pt} \frac{\mathrm{d}}{\mathrm{d}\theta}\lbrace( {M}(t+\theta)-\bar{ {M}})    \sigma (t+\theta) \rbrace \mathrm{d}\theta || \nonumber \\
& \qquad \leq ||{  E}|| ||   \sigma ||  + \delta_6  . \label{chi_8}
\end{align} 
\end{subequations}
}
Here $ {M}$ and $  H$ are explicitly represented in time for ease of notation. The upper bound of $ ||  \sigma ||$ is formulated using (\ref{chi_1})-(\ref{chi_8}) from (\ref{sig 5}) as
\begin{align}
 \lVert    \sigma \rVert &\leq \beta_{0}^{*} + \beta_{1}^{*} || {D} || \lVert   \xi \rVert, \label{eq:upbound} \\
 \text{where}~ & \beta_{0}^{*} =\frac{\sum_{i=1}^{6}\delta_i}{1-\lVert   E \rVert}, ~ \beta_{1}^{*} = \frac{2 \lVert   E  \rVert  }{1-\lVert   E\rVert}  \nonumber
 \end{align}
under the following condition 
\begin{align} 
 ||   E ||= ||   {I} -  {M}^{-1}\bar{  M} || <1. \label{condition} 
\end{align}
\begin{remark}[Choice of $\bar{  M}$ and upper bound of $||\sigma||$] \label{remark_mass}
 The condition (\ref{condition}) is standard in the TDE literature \cite{8128491, 8701563, dhadekar2021robust, lim2019delayed}: the proposed formulation does not introduce any additional design condition. From the upper bound on $M$ (cf. Remark \ref{remark_1}), one can always design $\bar{  M}$ to satisfy (\ref{condition}).The presence of both states and their constraints in the upper bound structure of $||\sigma||$ requires a new adaptive control framework as derived subsequently.
\end{remark}

\subsection{Adaptive law design} 
The error dynamics (\ref{eq:errdynamics_1}) can be re-written in form of 
\begin{align}
\dot{  \xi}& = (  A_0 +   {\bar{A}})  \xi+   {B} (   \sigma-  \Delta   u), \label{eq:errdynamics_2}
\end{align}
where $ {B}=\begin{bmatrix}
 {0}\\
 {I}_{n}
\end{bmatrix}$,
$  A_0 = \begin{bmatrix}
-    \lambda_P &   0  \\
  0 & -   \lambda_D
\end{bmatrix}$, 
$  {\bar{A}} = \begin{bmatrix}
 \lambda_P &  {I}_n \\
- {D}_P & - {D}_{D}+   \lambda_D
\end{bmatrix}$ and 
$   \lambda_P= \diag\lbrace \lambda_{p1}, \cdots, \lambda_{pn} \rbrace,    \lambda_D = \diag \lbrace \lambda_{d1}, \cdots, \lambda_{dn} \rbrace$ are user-defined positive definite diagonal matrices with $\lambda_{pi}, \lambda_{di} \in \mathbb{R}^{+}$ for each $i=1,2, \cdots,n$.

The term $  \Delta   u$ is designed as
\begin{align} \label{eq:auxinput_2}
  \Delta { {u}}(t) &= \rho \frac{  s}{||  s||} ,
\end{align}
where $ {s=B}^T {D_\xi}  \xi$ and $  D_{\xi =}$ diag$\{ {D}_{P},  {D}_{D} \}$. The adaptive gain $\rho$ in (\ref{eq:auxinput_2}) is formulated based on the upper bound structure of $|| \sigma ||$ from (\ref{eq:upbound}) as
\begin{align}
\rho=\hat{\beta}_0+\hat{\beta}_1 ||  {D}|| ||   \xi || + \zeta, \label{sw gain} 
\end{align}
where $\hat{\beta}_0, \hat{\beta}_1$ are the estimates of $\beta_0, \beta_1 \in \mathbb{R}^{+}$, respectively and adapted via the following laws:
\begin{subequations} \label{adaptive_law}
\begin{align}
&\dot{\hat{\beta}}_{0} = ||  s ||  -    \nu_0\hat{\beta}_{0},~\hat{\beta}_{0} (0) > 0, \label{adaptive_law_1} \\
&\dot{\hat{\beta}}_{1} = ||  s || ||  D|| ||   \xi|| -    \nu_1 \hat{\beta}_{1},~\hat{\beta}_{1} (0) > 0, \label{adaptive_law_2} \\ 
&\dot{\zeta} = 
    -\left( 1 + ||  D_{\xi}||||\bar{  A}||||   \xi||^2 \right. \label{adaptive_law_3} \\
&\left. \quad - \sum_{i=1}^{n} \frac{\dot{k}_{pi}}{k_{pi}} \left[ \frac{e_{i}^{2}} {k_{pi}^{2}-e_{i}^{2}} \right] 
- \sum_{i=1}^{n} \frac{\dot{k}_{vi}}{k_{vi}} \left[ \frac{\dot{e}_{i}^{2}} {k_{vi}^{2}- \dot{e}_{i}^{2}} \right] \right) \zeta + \epsilon, \nonumber
\end{align}
\end{subequations}
where $\nu_i, \epsilon \in\mathbb{R}^{+}$ are user-defined scalars. Finally, combining (\ref{eq:input_1}) and (\ref{eq:auxinput_2}) becomes
\begin{equation}\label{theproposedcontrollaw}
\begin{split}
{  {\tau }} &= \underbrace {{{  {\tau }}_L} - {\bar{  M}}{{{\ddot {  q}}}_L}}_{{\text{TDE~part}}}+\underbrace {{\bar{  M}}({{{\ddot {  q}}}^d} + {{ {D}}_D}{\dot {  e}} + {{ {D}}_P}{ {e}})}_{{\text{Desired~error~constraint~part}}} +\underbrace { \rho {  s}/{||  s||}  }_{{\text{Adaptive~part}}}.
\end{split}
\end{equation}
\begin{remark}[Choice of $L$] \label{remark_L}
The upper bounds in (\ref{chi_1})-(\ref{chi_8}) reveal that high value of time delay $L$ will lead to high values of $\delta_i$, i.e., larger TDE error. Therefore, $L$ is to be selected as the smallest possible sampling time of the low level micro-controller, which is consistent with traditional TDE-based literature \cite{ lim2019delayed, dhadekar2021robust}.
\end{remark}


\begin{remark}
It may appear from (\ref{eq:auxinput}) that the gains $ {D}_{P}$ and $ {D}_{D}$ will become infeasible if the error variables touch the constraints. However, the subsequent closed-loop stability analysis will show that the tracking errors never violate the constraints with the proposed control design.
\end{remark}

\section{Closed-Loop System Stability}\label{sec stability}

\vspace{-4pt}

\begin{theorem}
Under Property 1 and using the proposed control laws (\ref{theproposedcontrollaw}), along with the adaptive law (\ref{adaptive_law}) and the design condition (\ref{condition}), the tracking error trajectories $ {e}$ and $\dot{  e}$ remain within the bounds defined in \eqref{eq:bounds_p} for all $t \geq 0$.
\end{theorem}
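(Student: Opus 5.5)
The plan is to use a time-varying barrier Lyapunov function (BLF) together with quadratic terms for the adaptive parameters. Specifically, I would take
\begin{align*}
V &= \frac{1}{2}\sum_{i=1}^{n}\log\frac{k_{pi}^2}{k_{pi}^2-e_i^2}+\frac{1}{2}\sum_{i=1}^{n}\log\frac{k_{vi}^2}{k_{vi}^2-\dot e_i^2}\\
&\quad+\frac{1}{2}\sum_{j=0}^{1}(\hat\beta_j-\beta_j^*)^2+\frac{1}{2}\zeta^2 .
\end{align*}
On the open set where $|e_i|<k_{pi}$ and $|\dot e_i|<k_{vi}$, $V$ is well defined and nonnegative, and $V\to\infty$ as any error approaches its constraint. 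By Remark~\ref{remark_constraint_choice}, $V(0)$ is finite. Hence the theorem follows if I show that $V$ stays bounded along closed-loop solutions. The standard BLF argument (cf.\ \cite{tee2009barrier}) then shows that the constraints can never be reached. Along the way I also need to note that $\hat\beta_0,\hat\beta_1,\zeta$ stay positive. This follows from their laws (\ref{adaptive_law}): $\hat\beta_j$ has nonnegative forcing and positive initial value, and $\zeta$ obeys a linear ODE with forcing $\epsilon>0$, so comparison gives $\zeta(t)>0$ whenever $\zeta(0)>0$.

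\textbf{Derivative of the BLF terms.} Differentiating the $e_i$ term gives
$$\frac{e_i\dot e_i}{k_{pi}^2-e_i^2}-\frac{\dot k_{pi}}{k_{pi}}\frac{e_i^2}{k_{pi}^2-e_i^2},$$
and the $\dot e_i$ term is analogous. Stacking these, the state-dependent part equals $\xi^TD_\xi\dot\xi$ plus exactly the $\dot k/k$ sums that appear in (\ref{adaptive_law_3}). Substituting (\ref{eq:errdynamics_2}) gives
$$\xi^TD_\xi(A_0+\bar A)\xi+s^T(\sigma-\Delta u),$$
since $s=B^TD_\xi\xi$. I would then bound the pieces as follows:
\begin{itemize}
\item $\xi^TD_\xi A_0\xi\le 0$;
\item $\xi^TD_\xi\bar A\xi\le\|D_\xi\|\|\bar A\|\|\xi\|^2$;
\item by (\ref{eq:upbound}) and (\ref{eq:auxinput_2}), $s^T(\sigma-\Delta u)\le\|s\|(\beta_0^*+\beta_1^*\|D\|\|\xi\|)-\|s\|(\hat\beta_0+\hat\beta_1\|D\|\|\xi\|+\zeta)$.
\end{itemize}

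\textbf{Derivative of the adaptive terms.} Using (\ref{adaptive_law_1})--(\ref{adaptive_law_2}), the $\hat\beta_j$ terms contribute
$$(\hat\beta_j-\beta_j^*)\|s\|(\cdot)-\nu_j(\hat\beta_j-\beta_j^*)\hat\beta_j .$$
The first part cancels the mismatch terms from the $s^T(\sigma-\Delta u)$ bound exactly. The leakage part satisfies
$$-\nu_j(\hat\beta_j-\beta_j^*)\hat\beta_j\le-\frac{\nu_j}{2}(\hat\beta_j-\beta_j^*)^2+\frac{\nu_j}{2}\beta_j^{*2}.$$
The $\zeta$ term contributes $\zeta\dot\zeta$. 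By the design of (\ref{adaptive_law_3}), this produces
$$-\zeta^2-\zeta^2\|D_\xi\|\|\bar A\|\|\xi\|^2+\zeta^2\Bigl(\sum\tfrac{\dot k}{k}\,\text{terms}\Bigr)+\epsilon\zeta .$$

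\textbf{Main obstacle: absorbing the indefinite terms with $\zeta$.} The hardest step is to make the $\zeta$-terms dominate the indefinite ones. There are two such terms: $\|D_\xi\|\|\bar A\|\|\xi\|^2$, which may blow up near the constraints, and the $\dot k/k$ sums, which are nonpositive anyway because $\dot k\le0$ when $k_0\ge k_{ss}$. The law for $\zeta$ is built so that its coefficient multiplies exactly these quantities. I expect to have to argue that on the region where $\zeta\ge1$ the cross terms cancel, and handle the complementary region separately. The term $-\zeta\|s\|$ from $\Delta u$ also helps. Failing a clean cancellation, I would bound these terms by a constant on any finite interval where $V$ is finite and use a contradiction argument on the first exit time. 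Assuming these terms are absorbed, the estimate reduces to
$$\dot V\le-\kappa V_a+c,$$
where $V_a$ collects the adaptive quadratic terms and $c=\sum_j\frac{\nu_j}{2}\beta_j^{*2}+\frac{\epsilon^2}{2}$. This is combined with the fact that the BLF part cannot increase without bound.

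\textbf{Contradiction argument.} Suppose $t^*$ is the first time some $|e_i|$ or $|\dot e_i|$ reaches its constraint. On $[0,t^*)$ the estimate above gives $V(t)\le V(0)+ct^*$, which is finite. But $V\to\infty$ as $t\to t^*$, a contradiction. Therefore $|e_i(t)|<k_{pi}(t)$ and $|\dot e_i(t)|<k_{vi}(t)$ for all $t\ge0$. This argument also covers the singularity of $s/\|s\|$ at $s=0$, handled via $\sat(\cdot)$, since $\Delta u$ stays bounded.
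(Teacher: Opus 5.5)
There is a genuine gap, and it sits at the step you yourself flag as the main obstacle. You reach $\dot V\le-\kappa V_a+c$ only by ``assuming these terms are absorbed,'' and with $\tfrac12\zeta^2$ in $V$ they cannot be absorbed. Write
\[ a(t)=1+\|D_\xi\|\|\bar A\|\|\xi\|^2-\sum_i\frac{\dot k_{pi}}{k_{pi}}\frac{e_i^2}{k_{pi}^2-e_i^2}-\sum_i\frac{\dot k_{vi}}{k_{vi}}\frac{\dot e_i^2}{k_{vi}^2-\dot e_i^2}. \]
Your bounds leave exactly $a-1$ in $\dot V$, while $\zeta\dot\zeta=-\zeta^2-\zeta^2(a-1)+\epsilon\zeta$. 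A fraction $1-\zeta^2$ of the bad terms therefore survives whenever $\zeta<1$. Since $\dot k\le 0$ gives $a\ge1$, you have $\dot\zeta\le-\zeta+\epsilon$, hence $\zeta(t)\le\max\{\zeta(0),\epsilon\}$, which is at most $0.1$ with the values in Table~\ref{tab:control_parameters}. So for $\epsilon<1$ the region $\zeta\ge1$ is at best an initial transient, and there is nothing to ``handle separately.''

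The surviving terms are not benign. Both $D_\xi$ and $\bar A$ contain $D_P,D_D$, so $\|D_\xi\|\|\bar A\|\|\xi\|^2$ grows at least like $(k_{pi}^2-e_i^2)^{-2}$ near the boundary. The barrier term $-\frac{\dot k_{pi}}{k_{pi}}\frac{e_i^2}{k_{pi}^2-e_i^2}$ is \emph{nonnegative} when $\dot k_{pi}\le0$, since a shrinking constraint pushes the error toward it, and it grows like $(k_{pi}^2-e_i^2)^{-1}$. Your remark that the $\dot k/k$ sums are ``nonpositive anyway'' therefore has the sign backwards: what enters $\dot V$ is minus that sum. Both terms outgrow the logarithmic barrier, so neither can be dominated by $V$ or by a constant. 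Your fallback is circular for the same reason. On $[0,t^*)$ these quantities blow up as $t\to t^*$ by the very definition of $t^*$, and any bound on $[0,t^*-\delta]$ depends on $\delta$, so $V(t)\le V(0)+ct^*$ does not follow.

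The idea you are missing is the paper's normalization of the $\zeta$-term by its lower bound. The function $V$ in \eqref{lyap} contains $\zeta/\underline{\zeta}$, with $\underline{\zeta}>0$ taken from \eqref{bound}. Since $a\ge1>0$ and $\zeta/\underline{\zeta}\ge1$, one gets $\dot\zeta/\underline{\zeta}=-a\,\zeta/\underline{\zeta}+\epsilon/\underline{\zeta}\le-a+\epsilon/\underline{\zeta}$. This yields \eqref{step6}, which cancels $\|D_\xi\|\|\bar A\|\|\xi\|^2$ and the $\dot k/k$ terms with unit coefficient, however small $\zeta$ is. In your setup, $\zeta^2/(2\underline{\zeta}^2)$ would achieve the same. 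The paper then keeps $\xi^TD_\xi A_0\xi$ rather than discarding it. It uses $\log\frac{k^2}{k^2-x^2}<\frac{x^2}{k^2-x^2}$ together with $\zeta\le\bar\zeta$ to reach $\dot V\le-\varrho V+\text{const}$.

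Be aware that this mechanism rests entirely on the uniform lower bound \eqref{bound}, which the paper asserts directly from \eqref{adaptive_law_3}. Because $\zeta$ can be driven down to roughly $\epsilon/a$, such a bound is only clear once $a(t)$ is known to stay bounded. A careful write-up must supply that ingredient too. As it stands, your proposal contains neither the normalization nor the lower bound, and without them the barrier argument does not close.
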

\begin{proof}
Modeling the adaptive laws (\ref{adaptive_law_1}) as linear time-varying systems, and using their analytical solutions from positive initial conditions, it can be verified
that $\hat{\beta}_{0}, \hat{\beta}_{1}\geq 0$ and from  (\ref{adaptive_law_3}), it can be verified that $\exists ~\bar{\zeta},  \underline{\zeta} \in \mathbb{R}^{+}$ such that
\begin{align} \label{bound}
0 < \underline{\zeta} \leq \zeta (t) \leq \bar{\zeta}, ~~ \forall t > 0.     
\end{align}

Stability is analyzed using the following Lyapunov function: 
{ \small
\begin{align} \label{lyap}
V &= \frac{1}{2}\sum_{i=1}^{n}\log{\left(\frac{k_{pi}^2}{k_{pi}^{2}-e^{2}_i }\right)} + \frac{1}{2}\sum_{i=1}^{n}\log{\left(\frac{k_{vi}^2}{k_{vi}^{2}-\dot{e}^{2}_i }\right)} \nonumber \\
&+ \sum_{j=0}^{1}\frac{(\hat{\beta}_{j} -\beta_{j}^{*})^2}{2} + \frac{\zeta}{\underline{\zeta}}, 
\end{align}
}

Taking the time derivative of (\ref{lyap}) 
{ \small
 \begin{align} \label{step1}
\dot{V} &= \sum_{i=1}^{n} \left( \frac{e_{i}\dot{e}_{i}}{k_{pi}^{2}-e_{i}^{2}} + \frac{\dot{k}_{pi}}{k_{pi}} - \frac{k_{pi} \dot{k}_{pi}}{ k_{pi}^{2}-e_{i}^{2}} \right) + \sum_{j=0}^{1}(\hat{\beta}_{j} - \beta_{j}^{*}) \dot{\hat{\beta}}_{j} \nonumber \\
&+ \sum_{i=1}^{n} \left( \frac{\dot{e}_{i}\ddot{e}_{i}}{k_{vi}^{2}- \dot{e}_{i}^{2}}  + \frac{\dot{k}_{vi}}{k_{vi}} - \frac{k_{vi} \dot{k}_{vi}}{ k_{vi}^{2}- \dot{e}_{i}^{2}}  \right)  + \frac{\dot{\zeta}}{\underline{\zeta}}  \nonumber \\
&=e^T D_P \dot{e} + \dot{e}^T D_D \ddot{e} +  \sum_{j=0}^{1}(\hat{\beta}_{j} - \beta_{j}^{*}) \dot{\hat{\beta}}_{j} + \frac{\dot{\zeta}}{\underline{\zeta}} \nonumber \\
&+\sum_{i=1}^{n} \left(  \frac{\dot{k}_{vi}}{k_{vi}} - \frac{k_{vi} \dot{k}_{vi}}{ k_{vi}^{2}- \dot{e}_{i}^{2}} + \frac{\dot{k}_{pi}}{k_{pi}} - \frac{k_{pi} \dot{k}_{pi}}{ k_{pi}^{2}-e_{i}^{2}} \right) \nonumber \\
&= \xi^T D_{\xi} \dot{\xi} + \sum_{j=0}^{1}(\hat{\beta}_{j} - \beta_{j}^{*}) \dot{\hat{\beta}}_{j} + \frac{\dot{\zeta}}{\underline{\zeta}}\nonumber \\
&+ \sum_{i=1}^{n} \left(  \frac{\dot{k}_{vi}}{k_{vi}} - \frac{k_{vi} \dot{k}_{vi}}{ k_{vi}^{2}- \dot{e}_{i}^{2}} + \frac{\dot{k}_{pi}}{k_{pi}} - \frac{k_{pi} \dot{k}_{pi}}{ k_{pi}^{2}-e_{i}^{2}} \right). 
\end{align}
}

Using (\ref{eq:errdynamics_2}) one can get
{ \small
\begin{align} \label{step2}
\dot{V} &=     {\xi}^T  {D}_{\xi}    A_0   \xi +    {\xi}^T  {D}_{\xi}   {\bar{A}}   \xi +     {\xi}^T  {D}_{\xi}  {B} (  \sigma-  \Delta   u) + \frac{\dot{\zeta}}{\underline{\zeta}} \nonumber \\ 
& +  \sum_{i=1}^{n} \hspace{-.1cm} \left(  \hspace{-.1cm} \frac{\dot{k}_{vi}}{k_{vi}}\hspace{-.05cm}  -\hspace{-.05cm}  \frac{k_{vi} \dot{k}_{vi}}{ k_{vi}^{2}- \dot{e}_{i}^{2}} \hspace{-.1cm} +\hspace{-.1cm}  \frac{\dot{k}_{pi}}{k_{pi}} - \frac{k_{pi} \dot{k}_{pi}}{ k_{pi}^{2}-e_{i}^{2}} \hspace{-.1cm} \right) \hspace{-.1cm} + \hspace{-.1cm} \sum_{j=0}^{1}(\hat{\beta}_{j} - \beta_{j}^{*}) \dot{\hat{\beta}}_{j} \nonumber \\
&\leq    {\xi}^T  {D}_{\xi}    A_0   \xi + ||  D_{\xi}||||\bar{  A}||||   \xi||^2 + ||s|||| \sigma|| - \rho||s|| + \frac{\dot{\zeta}}{\underline{\zeta}} \nonumber \\
& +  \sum_{i=1}^{n} \hspace{-.1cm} \left(\hspace{-.1cm}   \frac{\dot{k}_{vi}}{k_{vi}}\hspace{-.05cm}  - \hspace{-.05cm} \frac{k_{vi} \dot{k}_{vi}}{ k_{vi}^{2}- \dot{e}_{i}^{2}} \hspace{-.05cm} + \hspace{-.05cm} \frac{\dot{k}_{pi}}{k_{pi}} - \frac{k_{pi} \dot{k}_{pi}}{ k_{pi}^{2}-e_{i}^{2}} \hspace{-.1cm} \right)\hspace{-.1cm}  +\hspace{-.1cm}  \sum_{j=0}^{1}\hspace{-.05cm} (\hat{\beta}_{j} - \beta_{j}^{*}) \dot{\hat{\beta}}_{j} \nonumber \\
&\leq      {\xi}^T  {D}_{\xi}    A_0   \xi + ||  D_{\xi}||||\bar{  A}||||   \xi||^2 - (\hat{\beta}_{0} - \beta_{0}^{*}) ||s|| \nonumber \\
&- (\hat{\beta}_{1} - \beta_{1}^{*}) (||  s|| ||   \xi||||  D||)  + \sum_{j=0}^{1}(\hat{\beta}_{j} - \beta_{j}^{*}) \dot{\hat{\beta}}_{j}  + \frac{\dot{\zeta}}{\underline{\zeta}} \nonumber \\
&+  \sum_{i=1}^{n} \left(  \frac{\dot{k}_{vi}}{k_{vi}} - \frac{k_{vi} \dot{k}_{vi}}{ k_{vi}^{2}- \dot{e}_{i}^{2}} + \frac{\dot{k}_{pi}}{k_{pi}} - \frac{k_{pi} \dot{k}_{pi}}{ k_{pi}^{2}-e_{i}^{2}} \right).
\end{align}
}
Also, from (\ref{adaptive_law_1})  we have
{\small
 \begin{align} \label{step3}
 &\sum_{j=0}^{1}(\hat{\beta}_{j} - \beta_{j}^{*}) \dot{\hat{\beta}}_{j} = (\hat{\beta}_{0} - \beta_{0}^{*}) (||s|| || -    \nu_0 \hat{\beta}_{0}) \nonumber \\ 
 & +(\hat{\beta}_{1} - \beta_{1}^{*}) (||s|| ||\xi|| ||D|| -    \nu_1 \hat{\beta}_{1}) \nonumber \\ 
 &= (\hat{\beta}_{0} - \beta_{0}^{*}) ||s||  +(\hat{\beta}_{1} - \beta_{1}^{*}) (||  s|| ||   \xi||||  D||) \nonumber \\ 
 & +  \sum_{j=0}^{1}(   \nu_j \hat{\beta}_{j} \beta_{j}^{*} -    \nu_j \hat{\beta}_{j}^2).  
 \end{align}
 }
 One can verify that
 {\small
 \begin{equation} \label{step4}
 (   \nu_j \hat{\beta}_{j} \beta_{j}^{*} -    \nu_i \hat{\beta}_{j}^2) = - \frac{   \nu_i}{2}  \left((\hat{\beta}_{j} -  \beta_{j}^*)^2 - {\beta_{j}^*}^2\right). 
 \end{equation}
}
For $l= p,v$ and $\mu = e, \dot{e}$, we can simplify
\begin{align} \label{step5}
&\sum_{i=1}^{n} \frac{\dot{k}_{li}}{k_{li}} - \frac{k_{li} \dot{k}_{li}}{ k_{li}^{2}-\mu_{i}^{2}}
=
-\sum_{i=1}^{n} \frac{\dot{k}_{li}}{k_{li}} \left[ \frac{\mu_{i}^{2}} {k_{li}^{2}-\mu_{i}^{2}} \right] 
 \end{align}
The adaptive law (\ref{adaptive_law_3}) and relation (\ref{bound}) lead to
{\small
 \begin{align} \label{step6}
 \frac{\dot{\zeta}}{\underline{\zeta}} & \leq  -||  D_{\xi}||||\bar{  A}||||   \xi||^2 +\sum_{i=1}^{n} \frac{\dot{k}_{pi}}{k_{pi}} \left[ \frac{e_{i}^{2}} {k_{pi}^{2}-e_{i}^{2}} \right] \nonumber \\
&+\sum_{i=1}^{n} \frac{\dot{k}_{vi}}{k_{vi}} \left[ \frac{\dot{e}_{i}^{2}} {k_{vi}^{2}- \dot{e}_{i}^{2}} \right] + \frac{{{\epsilon}}}{\underline{\zeta}}.
 \end{align}
 }
Using results from \eqref{step3},\eqref{step4},\eqref{step5},\eqref{step6} into \eqref{step2}, give
{ \small
\begin{align} \label{step7}
\dot{V} &=    {\xi}^T  {D}_{\xi}    A_0   \xi  -\sum_{j=0}^{1}\frac{   \nu_i}{2}  \left((\hat{\beta}_{j} -  \beta_{j}^*)^2 - {\beta_{j}^*}^2\right) + \frac{{{\epsilon}}}{\underline{\zeta}} \nonumber \\
&= - \begin{bmatrix}
{  e}^T&
\dot{  e}^T
\end{bmatrix} \begin{bmatrix} D_p \lambda_p &  0 \\ 0 & D_d \lambda_d \end{bmatrix} \begin{bmatrix}
{  e} \nonumber \\
\dot{  e} \end{bmatrix} \nonumber \\
&-\sum_{j=0}^{1}\frac{   \nu_i}{2}  \left((\hat{\beta}_{j} -  \beta_{j}^*)^2 - {\beta_{j}^*}^2\right) + \frac{{{\epsilon}}}{\underline{\zeta}} \nonumber \\
&= -\sum_{i=1}^{n}  
\lambda_{pi} {\left(\frac{e_{i}^2}{k_{pi}^{2}-e^{2}_i }\right)}  -\sum_{i=1}^{n}  
\lambda_{di} {\left(\frac{\dot{e}_{i}^2}{k_{vi}^{2}-\dot{e}^{2}_i }\right)} \nonumber \\
&-\sum_{j=0}^{1}\frac{   \nu_j}{2}  \left((\hat{\beta}_{j} -  \beta_{j}^*)^2 - {\beta_{j}^*}^2\right) + \frac{{{\epsilon}}}{\underline{\zeta}}.
\end{align}
}
From the fact that
$   \log\left( \frac{k_x^2}{k_x^2 - x^2} \right) < \frac{x^2}{k_x^2 - x^2}, \quad \text{for } |x| < k_x$,
which holds within any compact set \( \Omega : |x| < k_x\), and for any \( k_x \in \mathbb{R}_+ \), we can write
{\small
\begin{align} \label{step8}
    \dot{V} \leq &-\sum_{i=1}^{n}  
\lambda_{pi} \log{\left(\frac{k_{pi}^2}{k_{pi}^{2}-e^{2}_i }\right)}  -\sum_{i=1}^{n}  
\lambda_{di} \log{\left(\frac{k_{v`i}^2}{k_{vi}^{2}-\dot{e}^{2}_i }\right)} \nonumber \\
&-\sum_{j=0}^{1}\frac{   \nu_j}{2}  \left((\hat{\beta}_{j} -  \beta_{j}^*)^2 - {\beta_{j}^*}^2\right) + \frac{{{\epsilon}}}{\underline{\zeta}}.
\end{align}
}
 The definition of $V$ as in (\ref{lyap}) yields
 {\small
 \begin{align} \label{step9}
V &= \frac{1}{2}\sum_{i=1}^{n}\log{\left(\frac{k_{pi}^2}{k_{pi}^{2}-e^{2}_i }\right)} + \frac{1}{2}\sum_{i=1}^{n}\log{\left(\frac{k_{vi}^2}{k_{vi}^{2}-\dot{e}^{2}_i }\right)} \nonumber \\
&+ \sum_{j=0}^{1}\frac{(\hat{\beta}_{j} -\beta_{j}^{*})^2}{2} + \frac{\bar{\zeta}}{\underline{\zeta}}.
\end{align}
}
Combining $\dot{V}$ and $V$  from \eqref{step8} and \eqref{step9}
{\small
 \begin{align} \label{subs_10}
\dot{V} \leq -\varrho V  +  \varrho \frac{\bar{\zeta}}{\underline{\zeta}} + \frac{{{\epsilon}}}{\underline{\zeta}}  + \frac{1}{2}\sum \limits_{i=0}^{1}    \nu_i {\beta^*_{i}}^2
 \end{align}
 }
where $\varrho \triangleq \min \lbrace \min\lbrace \lambda_{pi} \rbrace, \min\lbrace \lambda_{di} \rbrace, {\nu_{pi}/2} \rbrace$ .

\end{proof}

\begin{remark}[Continuity in control law]
To make the control laws continuous for practical implementation, the terms $ {(s/||s||)}$  are usually replaced by continuous saturation functions  $\sat( {s}, \varpi)$ for $\varpi \in \mathbb{R}^{+}$ (cf. the notation definition in Sect. I): this still keeps the closed-loop system bounded with minor modification in stability analysis (cf. \cite{roy2019overcoming}), and hence repetition is avoided.
\end{remark}

\begin{algorithm}[!h]
 {\caption{ {Design steps of the proposed controller}}
 \vspace{0.0cm}
{\textbf{Step 1 (Selection of the constraints)}: Select the values of various parameters of the time-varying constraints, i.e., of the steady-state bounds $(k_{sspi}, k_{ssvi})$, of the initial values $(k_{0pi}, k_{0vi})$, and of the parameters $(\alpha_{pi}, \alpha_{vi})$ as per requirements, following Remark \ref{remark_constraint_choice}.
 
\textbf{Step 2 (Defining control gains):} Define state-constraint gains $ {D}_{P}$ and $ {D}_{D}$ based upon the parameters selected in Step 1.

\textbf{Step 3 (Designing adaptive controller):} 
Define $\bold{s}$, gains $\lambda_P, \lambda_D$ and $\rho$ as in (\ref{sw gain}) using adaptive laws in (\ref{adaptive_law})}}.

\textbf{Step 4 (Defining control input):} Select $\bar{  M}$ using \eqref{condition}  
and use the values chosen in Steps 1-3 to design the control inputs $  {\tau}$ as given in \eqref{theproposedcontrollaw}.

\vspace{0.0cm}
 \end{algorithm}

\section{Experimental Validation and Results}

The experimental platform consisted of UFactory 5-DoF xArm-5 robotic manipulator (cf. Fig. \ref{fig:xarm}), with an NVIDIA Jetson AGX Xavier serving as the compute unit. A custom-designed end-effector was attached, incorporating a Dynamixel XM430-W210-T servo motor to enable rapid switching between two tools: a marker and an eraser. This configuration allows for seamless tool transitions during task execution. Real-time joint state data were acquired via the manipulator’s internal feedback interface, supporting synchronized, high-frequency closed-loop control.

\subsection{Experimental Scenario}
The robot was commanded to perform a constrained drawing and erasing task on a surface: drawing of $3$ layers of concentric semicircles and then erase the middle semicircle without erasing/ disturbing the inner and outer semicircles. Involving physical contact and tight position and velocity limits in the task was designed to evaluate tracking performance and constraint handling under state-dependent and external disturbances arising from unknown surface contact forces and frictional variations at the end-effector. The specific sequences of the task are as follows:
\begin{itemize}
    \item Starting from the initial position, draw an innermost semicircle with a radius of $20$ cm; lift the pen and come back to the initial position.
    \item Draw an outermost semicircle with a radius of $24$ cm; lift the pen and come back to the initial position.
    \item Draw a middle semicircle with a radius of $22$ cm; lift the pen and come back to the initial position.
    \item Rotate the tool (from drawing to erasing) and erase the middle ($22$ cm) semicircle using the eraser starting from the initial position.
\end{itemize}
Both the drawing and erasing of the middle semicircle occur in a confined region between the inner and outer paths, thereby emphasizing the critical role of real-time constraint enforcement to ensure safe and feasible motion. The desired trajectory $({  q}^d, \dot{  q}^d)$ for the manipulator is pre-computed using the open-source package named \textit{ruckig}  \cite{berscheid2021jerk}. The $4$ cm radial distance between the inner and outer semicircles was selected to accommodate the $2$ cm wide eraser. It is important to note that the initial condition of the robot is always set at zero (as provided by the manufacturer). Therefore, the initial error always starts from zero (cf. the experimental error plots later). 

To evaluate the significance of the proposed control strategy, its performance was compared against two representative baselines: (i) an adaptive BLF-based controller \cite{liu2020finite} (referred as ABLF) and (ii) an adaptive TDE-based controller \cite{9001188} (referred ATDC).
The parameters used in the experiments for the proposed controller are listed in Table \ref{tab:control_parameters} along with parameters to design the constraints.
For both baselines, the controller gains were carefully tuned according to the respective design guidance to achieve their best possible performance under identical task conditions.

\begin{figure}
\includegraphics[width=.45\textwidth]{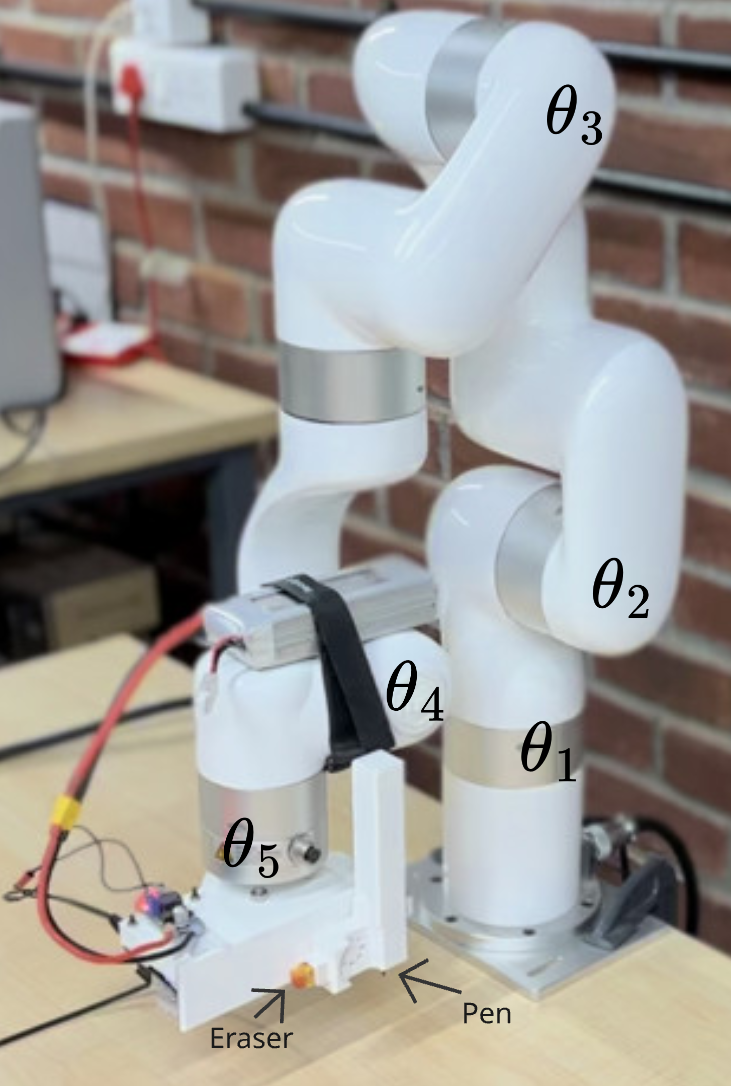}
    \centering
    \caption{Experimental setup of 5-DoF xArm and custom end-effector.}
    \label{fig:xarm}
\end{figure}

\begin{table}[htbp]
\centering
\caption{\color{black} Design Parameters for the Proposed Controller}
\begin{tabular}{|p{0.45\textwidth}|}
\hline
\vspace{0.1mm}
${\bar{  M}} =    I$;  $\lambda_D = \text{diag}\{3, 3, 3, 3, 3\}$ ;
$\lambda_P = \text{diag}\{5, 5, 5, 5,5\}$ \\
$L = 0.01$ \\
$\hat{\beta}_0(0) = \hat{\beta}_1(0)  = 0.01$; $\nu_{0} = \nu_{1} = 10.0$ \\
$\zeta(0) = 0.1$ ;
${\epsilon} = 0.0001$ \\
$k_{0p1}=k_{0p2}=k_{0p3}=k_{0p4}=k_{0p5}= 10.0$ \\
$k_{0v1}=k_{0v2}=k_{0v3}=k_{0v4}=k_{0v5}=20.0$ \\ 
$k_{ssp1}=k_{ssp2}=k_{ssp3}=k_{ssp4}=k_{ssp5}= 5.0$ \\ 
$k_{ssv1}=k_{ssv2}=k_{ssv3}=k_{ssv4}=k_{ssv5}= 10.0$ \\ 
$\alpha_{p} = \alpha_{v} = 0.075$ \\

\hline
\end{tabular}
\label{tab:control_parameters}
\end{table}

\subsection{Results and Analysis}
The performance of the controllers is illustrated in Figs. \ref{fig:exp_result}–\ref{fig:velocity_error}. Figure \ref{fig:exp_result} clearly illustrates that the semicircles drawn by the proposed controller closely match the intended trajectories, while those produced by the baseline controllers exhibit noticeable distortions. 
The results clearly demonstrate that the proposed controller outperforms the baselines in both tracking accuracy and constraint satisfaction. As shown in the tracking error plots \ref{fig:angle_error}–\ref{fig:velocity_error}, it consistently keeps the tracking errors within the prescribed position and velocity bounds—a direct result of its integrated adaptive time-delay estimation and BLF-based constraint handling. 

These qualitative trends are further substantiated by the quantitative results in Table \ref{table:result}, where the proposed controller achieves the lowest Root Mean Square (RMS) errors across all joints. ABLF, equipped with constraint handling capability, could honor the constraints; however, it cannot negotiate state-dependent and unmodelled uncertainties, leading to distorted drawings. Although ATDC can tackle state-dependent uncertainty, it lacks the ability to handle constraint; consequently, ATDC partially erases the inner and outer semicircles, which is unintended. These observations highlight that the capability to handle either the state-dependent uncertainty or the state constraints alone is not sufficient; a control framework, like the proposed one, must combine both these features to achieve the desired result. 

\begin{figure}
  \includegraphics[width=.46\textwidth]{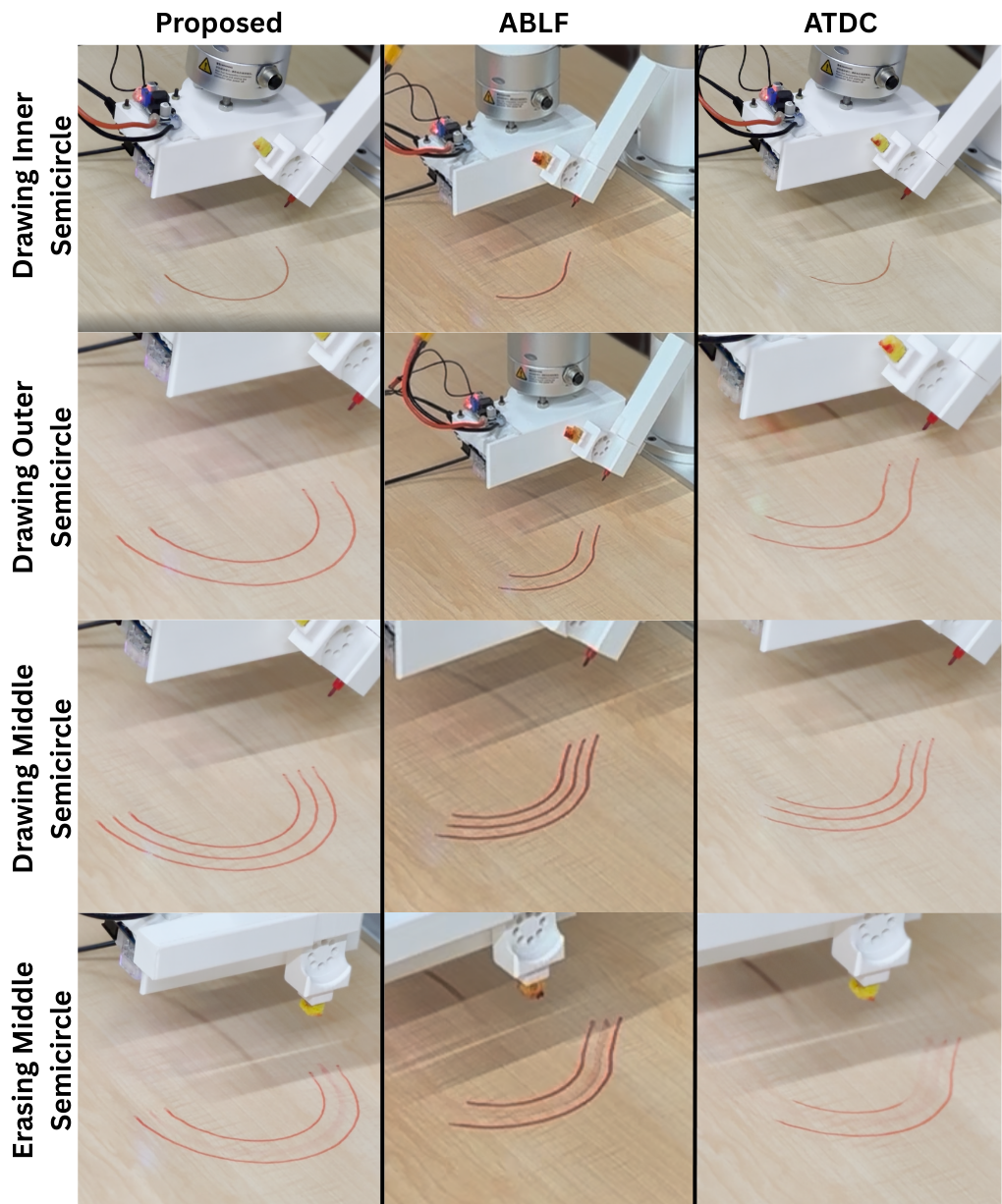}
    \centering
    \caption{Controller comparison during the constrained drawing–erasing task. Columns: Proposed, ABLF, and ATDC. Rows: drawing inner semicircle, drawing outer semicircle, drawing middle semicircle, and erasing the middle semicircle.}
    \label{fig:exp_result}
\end{figure}

\begin{figure}
    \includegraphics[width=.46\textwidth]{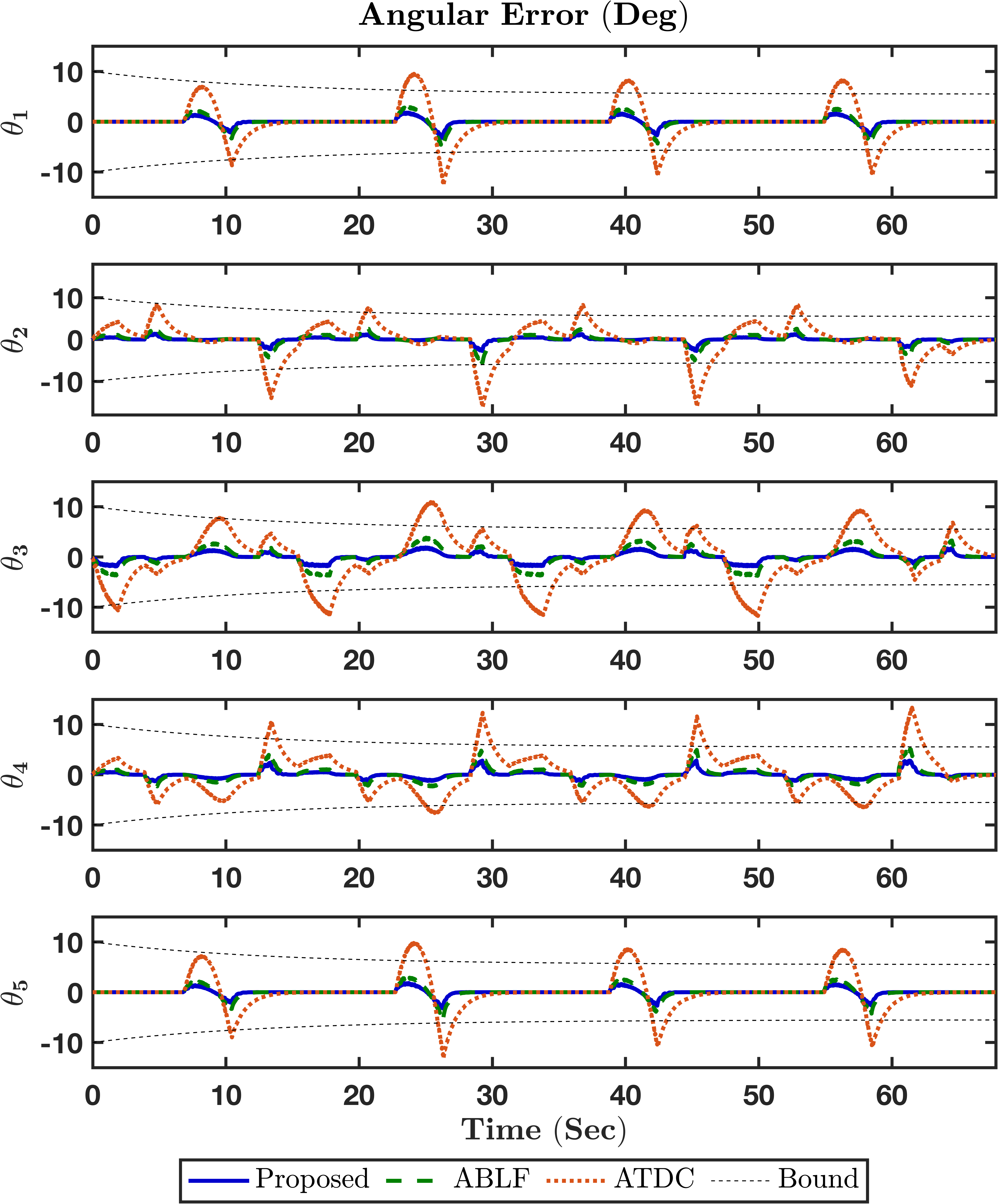}
    \centering
    \caption{Joint angle tracking error performance comparison.}
    \label{fig:angle_error}
\end{figure}

\begin{figure}
\includegraphics[width=.46\textwidth]{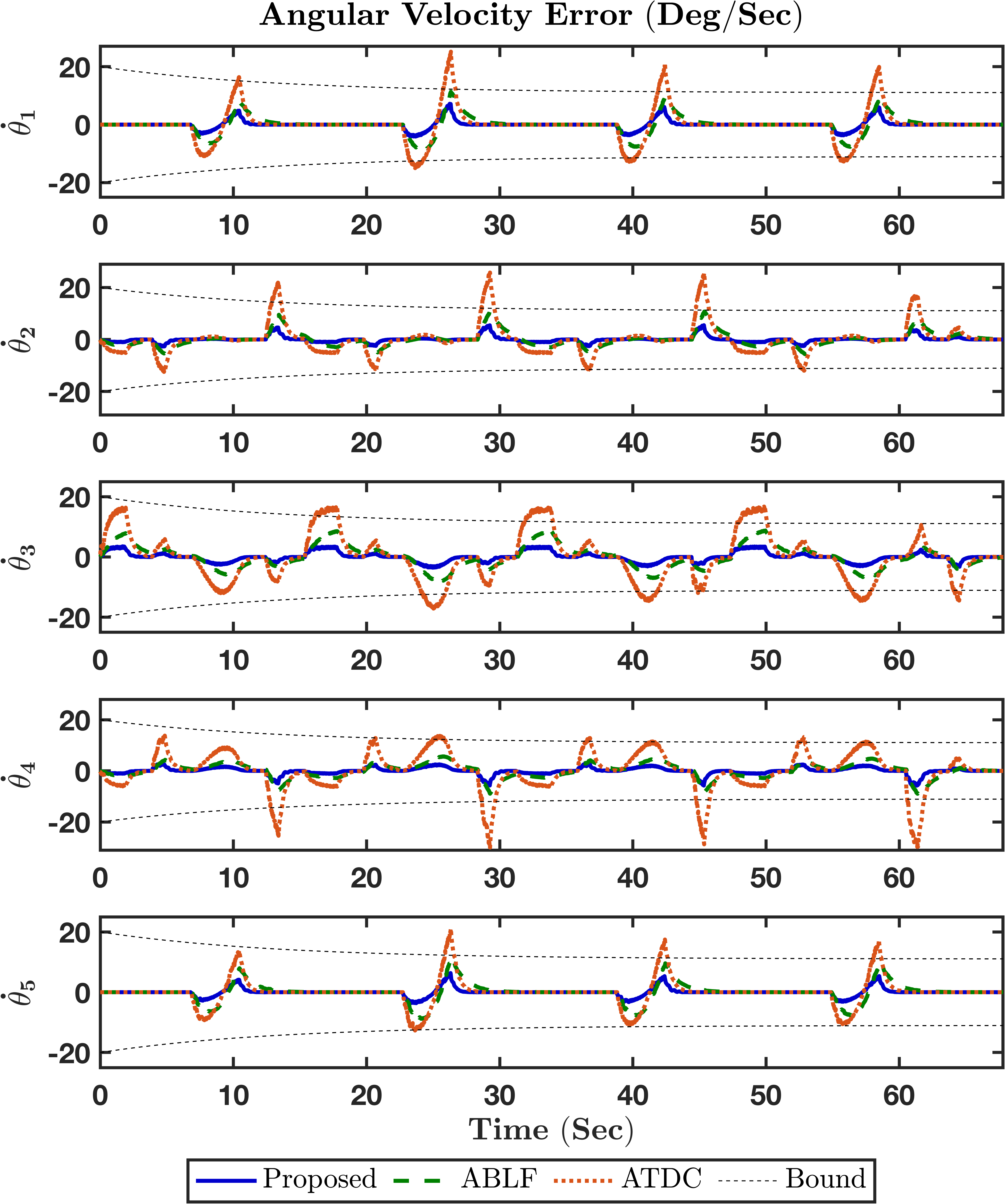}
    \centering
    \caption{Joint velocity tracking errors performance comparison.}
    \label{fig:velocity_error}
\end{figure}

\begin{table}[!t]
\renewcommand{\arraystretch}{1.1}
\caption{Performance Comparison in RMS Error}
\label{table:result}
\centering
\resizebox{\columnwidth}{!}{
\begin{tabular}{c c c c c c c}
    \hline\hline
    & & $\theta_1$ & $\theta_2$ & $\theta_3$ & $\theta_4$ & $\theta_5$ \\
    \hline
   \multirow{3}{*}{Position error (deg)} & Proposed & 0.64 & 1.06 & 0.82 & 0.98 & 0.63 \\
   & ABLF & 1.06 & 1.75 & 1.41 & 1.63 & 1.04 \\
   & ATDC & 2.31 & 4.04 & 3.63 & 3.91 & 2.30 \\
    \hline
    \multirow{3}{*}{Velocity error (deg/s)} & Proposed & 2.40 & 2.19 & 3.10 & 2.24 & 1.79 \\
    & ABLF & 3.88 & 4.26 & 3.82 & 4.12 & 3.87 \\
    & ATDC & 4.06 & 6.72 & 6.31 & 7.32 & 4.01 \\
    \hline\hline
\end{tabular}
}
\end{table}

\section{Conclusions}
This paper presented a unified adaptive-robust control framework that integrates time-delay estimation with BLF-based constraint enforcement for Euler–Lagrange robotic systems under state-dependent uncertainties. The proposed approach achieves real-time tracking while rigorously satisfying time-varying position and velocity constraints—without requiring prior model knowledge. Experimental validation on a 5-DoF robotic manipulator demonstrated the superior performance of the proposed controller over the state-of-the-art in contact-rich tasks, achieving lower tracking errors and strict constraint compliance. A future work would be to extend the proposed framework for underactuated robotic systems. 

\bibliographystyle{IEEEtran}
\bibliography{root}
\vspace{-1px}

\end{document}